\title{
    Is BatchEnsemble a Single Model? \\ 
    On Calibration and Diversity of Efficient 
    Ensembles
}
\author{
    Anton Zamyatin \\
    TU Wien \\
    \texttt{anton.zamyatin@tuwien.ac.at}
    \And
    Patrick Indri \\ 
    TU Wien \\
    \texttt{patrick.indri@tuwien.ac.at}
    \And
    Sagar Malhotra \\
    TU Wien \\
    \texttt{sagar.malhotra@tuwien.ac.at}
    \And
    Thomas Gärtner \\
    TU Wien \\
    \texttt{thomas.gaertner@tuwien.ac.at}
}
\date{\today}
\begin{document}

\maketitle
\begin{abstract}
In resource-constrained and low-latency settings, uncertainty estimates must be efficiently obtained.
Deep Ensembles provide robust epistemic uncertainty (EU) but require training multiple full-size models.
BatchEnsemble aims to deliver ensemble-like EU at far lower parameter and memory cost by applying learned rank-1 perturbations to a shared base network.
We show that BatchEnsemble not only underperforms Deep Ensembles but closely tracks a single model baseline in terms of accuracy, calibration and out-of-distribution (OOD) detection on CIFAR-10/10C/SVHN. 
A controlled study on MNIST finds members are near-identical in function and parameter space, indicating limited capacity to realize distinct predictive modes.
Thus, BatchEnsemble behaves more like a single model than a true ensemble.
\end{abstract}

 \section{Introduction}

Modern neural networks are increasingly used in safety-critical settings, where reliable uncertainty estimates are essential. 
Ensembles can be used to improve both accuracy and calibration \cite{lakshminarayanan2017deepEnsembles}.
Ensemble methods train multiple models independently and aggregate their predictions to capture epistemic uncertainty (EU), that is, the uncertainty in the model parameters that arises from training.
Techniques such as Deep Ensembles \citep{lakshminarayanan2017deepEnsembles} can be used to evaluate EU but require to train a neural network many times, which is often prohibitively expensive.
BatchEnsemble \cite{wen2020batchensemble} address this by (i) applying rank-1 perturbations to a single shared weight matrix to form distinct members and (ii) vectorizing inference so all members run in a single forward pass.
Despite their popularity, there is no systematic comparison of BatchEnsemble with standard Deep Ensembles; this paper fills that gap and investigates the shortcomings of BatchEnsemble.
We compare BatchEnsemble to Deep Ensembles as well as MC Dropout \cite{gal2016MCDropout}, and structure the rest of the paper as follows.
In Section~\ref{sec:Background}, we provide some preliminary background on Deep Ensembles, BatchEnsembles, notions of ensemble diversity and EU in ensemble methods.
In Section~\ref{sec:zero-measure}, we theoretically highlight that BatchEnsemble can only explore a significantly restricted portion of the parameter space, compared to Deep Ensemble.
In Section~\ref{sec:batchensemble-underperforms} we empirically show that BatchEnsemble underperform both Deep Ensemble and MC Dropout in terms of predictive performance across various metrics.
In Section~\ref{sec:batchensemble-diversity} we show that BatchEnsembles learned on MNIST \cite{mu2019mnistc} data are essentially equivalent to a single model, further confirming that the theoretical observations of restricted parameter space also transfers to restricted solution space in real-world settings.

\section{Background and Related Work}
\label{sec:Background}
\paragraph{Deep Ensemble.}
Deep Ensembles \citep{lakshminarayanan2017deepEnsembles} approximate the posterior distribution $p(y\mid x,D)$ of observing label $y$ for an unseen input $x$ given the training data $D$ by uniformly averaging the outputs of $k$ independently trained models:
\begin{equation}
    \hat{p}(y \mid x, D) = \frac{1}{k}\sum^k_{i=1}p(y \mid x, \theta_i),
\end{equation}
where $\theta_i$ are parameters obtained from different random initialization and training trajectories.
Deep Ensembles provide robust epistemic uncertainty (EU) under distribution shift and outperform scalable Bayesian approximations such as stochastic variational inference (SVI) \cite{ovadia2019predictiveUncertaintyDataShift}.
However, the computational and memory cost of Deep Ensembles scales linearly with $k$, which makes them impractical for low-latency or resource constrained settings such as high-throughput screening, real-time systems, or large-scale models.

\paragraph{BatchEnsemble.}
To mitigate the linear scaling in computational/memory cost of Deep Ensembles,
\citet{wen2020batchensemble} introduced BatchEnsemble.
BatchEnsemble promises to deliver ensemble-level EU at (nearly) single model cost.
To this end, BatchEnsemble embeds an ensemble within one network by applying learned low-rank, per-member perturbations to a shared weight matrix.
Let $W \in \mathbb{R}^{m \times n}$ denote a layer's shared weight and $r_i\!\in\!\mathbb{R}^m$, $s_i\!\in\!\mathbb{R}^n$ be member-specific learned vectors.
Then, BatchEnsemble parametrizes the weight matrix $W_i$ of each member $i=1, \cdots, k$ as
\begin{equation}
\label{eq:batch-ensemble}
    W_i = W \circ r_i s_i^\top,
\end{equation}
where $\circ$ denotes the Hadamard product and each $r_is_i^\top$ is a rank-1 matrix.
BatchEnsemble thus reduces the per-layer parameters from $O(kmn)$ for independent members obtained using Deep Ensembles, to $O(mn + k(m+n))$.
This leads to a substantial parameter reduction, especially for large matrices where $k\!\ll\!\min(m,n)$.
The forward pass for a single input $x \in \mathbb{R}^n$ can be written as
\begin{equation}
\label{eq:be-forward}
    W_ix = (W \circ r_i s_i^\top) = W(s_i \circ x) \circ r_i,
\end{equation}
with member-specific biases omitted for brevity.
This formulation admits a vectorized implementation (Appendix~\ref{appendix:batchensemble-vectorization}) that computes the prediction of all $k$ members in a single \emph{batched} forward pass---hence ``BatchEnsemble''---enabling within-device parallelism and high-throughput inference.
By contrast the popular ensemble alternative MC Dropout \cite{gal2016MCDropout} requires multiple stochastic forward passes to obtain an ensemble of predictions.

\paragraph{Ensemble Diversity and EU.}
A leading explanation for the success of Deep Ensembles is that independently trained members explore distinct solutions not only in parameter space but also in function space \cite{fort2019deep}.
In contrast, scalable Bayesian methods such as mean-field VI \cite{graves2011practical}, Bayes by Backprop \cite{blundell2015weight}, or MC Dropout \cite{gal2016MCDropout} approximate uncertainty by placing a simple distribution (e.g., diagonal Gaussian or dropout mask) centered around a single optimum.
Consequently, efficient ensembling methods should be designed such that they are able to replicate this effect by capturing distinct predictive modes, not just local perturbations around a single optimum.

\paragraph{This work.}
We compare BatchEnsemble to Deep Ensembles and MC Dropout on CIFAR-10 across in-distribution, distribution shift, and out-of-distribution (OOD) settings (Section~\ref{sec:batchensemble-underperforms}; protocols and metrics in Appendix~\ref{appendix:experimental-details-cifar-10} and \ref{appendix:metrics}).
BatchEnsemble does not deliver ensemble-level gains, yielding low epistemic uncertainty and closely matching the single-model baseline in terms of predictive performance, calibration, and OOD detection.
A diagnostic MNIST study further reveals BatchEnsemble members are nearly identical in parameter space and function space, implying that limited capacity to realize multiple predictive modes may underlie BatchEnsemble's limited EU.

\section{Multiplicative Rank-1 Perturbations Have Measure Zero}
\label{sec:zero-measure}

We begin by making some simple but pertinent theoretical observations that show that the parameter space expressible by BatchEnsembles is vanishingly small compared to Deep Ensembles. 

\begin{restatable}{observation}{setinclusion}
\label{obs2}
Consider ensembles with $k$ members.
For BatchEnsemble, consider the members as parameterized by  $r_i \in \mathbb{R}^m$, $s_i\in \mathbb{R}^n$, and the shared weight matrix $W$.
Let $\mathcal{W}_\text{BE} =  \{ \left( W \circ r_1s_1^\top, \dots, W \circ r_k s_k^\top \right) \mid r_i \in \mathbb{R}^m, s_i\in \mathbb{R}^n, i \in [k], W \in \mathbb{R}^{m\times n}\}$ be the set of weights achievable with BatchEnsemble.
Let $\mathcal{W}_\text{DE} = \{ \left(W_1, \dots, W_k \right) \mid W_i \in \mathbb{R}^{m\times n}, i\in [k] \}$ be the set of weights achievable with Deep Ensembles, where each ensemble member has an independent weight matrix layer $W_i$ of size $\mathbb{R}^{m\times n}$.
Then, $\mathcal{W}_\text{BE} \subsetneq \mathcal{W}_\text{DE}$. Moreover, $\mathcal{W}_\text{BE}$ has measure zero in $\mathcal{W}_\text{DE}$.
\end{restatable}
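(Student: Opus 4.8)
The inclusion $\mathcal{W}_\text{BE}\subseteq\mathcal{W}_\text{DE}$ is immediate, since every tuple $(W\circ r_1s_1^\top,\dots,W\circ r_ks_k^\top)$ is in particular a $k$-tuple of $m\times n$ matrices. The substance of the claim is therefore (i) that the inclusion is strict and (ii) that the image is Lebesgue-null. My plan is to prove both at once by exhibiting a single nonzero polynomial in the entries of the member matrices that vanishes identically on $\mathcal{W}_\text{BE}$ but not on all of $\mathcal{W}_\text{DE}\cong\mathbb{R}^{kmn}$. A proper algebraic subvariety of $\mathbb{R}^{kmn}$ has measure zero, so containing $\mathcal{W}_\text{BE}$ in such a variety settles (ii), while any point off the variety witnesses (i).

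The key step is to eliminate the shared, unobserved matrix $W$. For a single member the entry $(W_i)_{ab}=W_{ab}\,r_{i,a}\,s_{i,b}$ still depends on $W$, but comparing two members $i\neq j$ the shared factor cancels: formally $(W_i)_{ab}/(W_j)_{ab}=(r_{i,a}/r_{j,a})(s_{i,b}/s_{j,b})$, i.e. the entrywise ratio matrix is an outer product and hence has rank one. I would encode this rank-one constraint through the vanishing of a single $2\times2$ minor and clear denominators to obtain the polynomial
\begin{equation*}
P=(W_i)_{ab}(W_i)_{cd}(W_j)_{ad}(W_j)_{cb}-(W_i)_{ad}(W_i)_{cb}(W_j)_{ab}(W_j)_{cd},
\end{equation*}
for fixed distinct row indices $a\neq c$, column indices $b\neq d$, and a fixed pair $i\neq j$. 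Substituting the BatchEnsemble parametrization, a direct check shows every $W$-, $r$-, and $s$-factor occurs with the same multiplicity in both monomials, so $P\equiv0$ on $\mathcal{W}_\text{BE}$; this substitution is the one routine computation I would actually carry out.

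To finish, I would exhibit a concrete point of $\mathcal{W}_\text{DE}$ with $P\neq0$: taking $W_j$ to be the all-ones matrix and $W_i$ the all-ones matrix with a single perturbed entry at position $(a,b)$ makes the first monomial differ from the second. Such a point lies in $\mathcal{W}_\text{DE}\setminus\mathcal{W}_\text{BE}$, proving strictness, and simultaneously shows $P$ is not the zero polynomial, so its zero set is a proper subvariety of measure zero that contains $\mathcal{W}_\text{BE}$. I would also note that the argument requires $k\ge2$ (two members are needed to form the ratio) and $m,n\ge2$ (so that distinct indices $a\neq c$, $b\neq d$ exist); these are exactly the regimes of interest for weight matrices in deep networks.

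The main obstacle, and the crux of the argument, is precisely this elimination of $W$. A naive dimension-counting approach—noting that $\mathcal{W}_\text{BE}$ is the image of the polynomial map from $\mathbb{R}^{mn+k(m+n)}$ and invoking that a $C^1$ image of a lower-dimensional space is null—only works when $mn+k(m+n)<kmn$ and thus fails for small matrices. The ratio-and-minor construction avoids any dimension inequality and yields an explicit defining polynomial valid whenever $k,m,n\ge2$, which I find both cleaner and more robust.
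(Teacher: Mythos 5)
Your proof is correct, and it verifies cleanly: substituting $(W_i)_{ab}=W_{ab}r_{i,a}s_{i,b}$ into both monomials of $P$ yields the identical product $W_{ab}W_{cd}W_{ad}W_{cb}\cdot r_{i,a}r_{i,c}r_{j,a}r_{j,c}\cdot s_{i,b}s_{i,d}s_{j,b}s_{j,d}$, so $P$ vanishes on all of $\mathcal{W}_\text{BE}$ with no division and hence no caveats about zero entries, while your perturbed all-ones witness shows $P\not\equiv 0$. The route differs from the paper's in a substantive way. For strictness, the paper uses the same underlying observation---the entrywise ratio of two members is a rank-1 matrix---but phrases it via elementwise division of matrices with nonzero entries, which is exactly the denominator-clearing you perform; the two arguments are morally identical there. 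For the measure-zero claim, however, the paper abandons the rank argument entirely and switches to dimension counting: $\mathcal{W}_\text{BE}$ is the image of a map from $mn+k(m+n)$ parameters into $\mathbb{R}^{kmn}$, which is null whenever $mn+k(m+n)<kmn$, a condition the paper guarantees by restricting to $k,m,n>3$. Your single-polynomial argument handles both claims at once, needs only $k,m,n\ge 2$ (a regime the paper's dimension count does not cover, as you correctly point out), and produces an explicit defining equation for a variety containing $\mathcal{W}_\text{BE}$ rather than relying on the general fact that $C^1$ images of lower-dimensional spaces are null. What the paper's approach buys in exchange is brevity and a quantitative flavor---the parameter-count comparison $mn+k(m+n)$ versus $kmn$ directly communicates \emph{how much} smaller the BatchEnsemble family is, which is the rhetorical point of the section---but as a proof yours is the more robust and more general of the two.
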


A proof of Observation \ref{obs2} and examples of weight matrices unrepresented weight matrices are provided in Appendix~\ref{appendix:proofs}. 
This result shows that the weights reachable by BatchEnsemble’s rank-1 multiplicative perturbations constitute a strict, measure-zero subset of those reachable by independently trained members, i.e., Deep Ensembles.
This limited expressivity suggests that BatchEnsemble cannot capture the same degree of diversity as a Deep Ensemble.
In the next sections, we test this hypothesis empirically on real data.

\section{BatchEnsemble Underperforms Deep Ensemble}
\label{sec:batchensemble-underperforms}
We compare a single model, MC Dropout, Deep Ensembles, and BatchEnsemble on image classification, using the same architecture (ResNet-18 \cite{he2016resnet}) trained for 75 epochs on CIFAR-10 \cite{krizhevsky2009cifar}.
Following the protocol of \citet{ovadia2019predictiveUncertaintyDataShift}, we evaluate the models on three benchmarks:
\begin{itemize}[noitemsep, topsep=1pt]
    \item CIFAR-10 test set to measure in-distribution performance,
    \item CIFAR-10C \cite{hendrycks2019benchmarkingRobustness} at corruption severity 5 to assess robustness to distributional shift,
    \item SVHN \cite{netzer2011svhn} to test OOD detection.
\end{itemize}
For the in-distribution and distribution shift benchmarks, we report accuracy (Acc), negative log-likelihood (NLL), and expected calibration error (ECE).
We treat OOD detection as a binary classification task and report the area under the precision–recall curve (AUPR), the area under the ROC curve (AUROC), and the false positive rate at 95\% TPR (FPR95).
For each metric, we indicate whether lower (↓) or higher (↑) is better.
To further characterize uncertainty we compute two diagnostic metrics:
the predictive entropy (H) for total uncertainty and the Jensen–Shannon divergence (JSD) as a proxy for EU.
We report the mean and 95\% confidence intervals over five random seeds, computed using the standard error of the mean and the $t$-distribution with 4 degrees of freedom.
The results are summarized in Tables~\ref{tab:ind}--\ref{tab:ood}. 
Details on the metrics used are given in Appendix~\ref{appendix:metrics}.
A description of the training procedure and experimental considerations can be found in Appendix~\ref{appendix:experimental-details-cifar-10}.


\begin{table}[!h]
    \centering
    \footnotesize
    \caption{In-Distribution Benchmark (CIFAR-10)}
    \begin{tabular}{c|ccc|cc}
        \hline
        \textbf{Method} & \textbf{Acc↑} & \textbf{NLL↓} & \textbf{ECE↓} & \textbf{H} & \textbf{JSD} \\ \hline
        Single & \cellcolor{blue!10}0.941 ±0.004 & 0.237 ±0.003 & 0.034 ±0.003 & 0.073 ±0.005 & N/A \\ 
        MC Dropout & 0.936 ±0.001 & \cellcolor{blue!10}0.210 ±0.007 & \cellcolor{blue!10}0.021 ±0.002 & 0.120 ±0.005 & 0.022 ±0.001 \\ 
        Deep Ensemble & \cellcolor{blue!30}0.952 ±0.001 & \cellcolor{blue!30}0.152 ±0.003 & \cellcolor{blue!30}0.007 ±0.002 & 0.136 ±0.001 & 0.037 ±0.001 \\ 
        BatchEnsemble & 0.938 ±0.001 & 0.230 ±0.006 & 0.032 ±0.001 & 0.088 ±0.003 & 0.002 ±0.000 \\
        \hline
    \end{tabular}
    \label{tab:ind}
\end{table}

\begin{table}[!h]
    \centering
    \footnotesize
    \caption{Distribution Shift Benchmark (CIFAR-10C, shift intensity 5)}
    \begin{tabular}{c|ccc|cc}
        \hline
        \textbf{Method} & \textbf{Acc↑} & \textbf{NLL↓} & \textbf{ECE↓} & \textbf{H} & \textbf{JSD} \\ \hline
        Single & 0.558 ±0.008 & 2.545 ±0.142 & 0.323 ±0.014 & 0.327 ±0.026 & N/A \\ 
        MC Dropout &\cellcolor{blue!30}0.578 ±0.002 & \cellcolor{blue!10}2.054 ±0.084 & \cellcolor{blue!10}0.251 ±0.009 & 0.449 ±0.031 & 0.088 ±0.005 \\ 
        Deep Ensemble & \cellcolor{blue!10}0.575 ±0.006 & \cellcolor{blue!30}1.682 ±0.045 & \cellcolor{blue!30}0.206 ±0.009 & 0.593 ±0.018 & 0.176 ±0.005 \\ 
        BatchEnsemble & 0.547 ±0.015 & 2.357 ±0.164 & 0.308 ±0.012 & 0.400 ±0.011 & 0.008 ±0.002 \\
        \hline
    \end{tabular}
    \label{tab:shifted}
\end{table}

\begin{table}[!h]
    \centering
    \footnotesize
    \caption{OOD Benchmark (SVHN)}
    \begin{tabular}{c|ccc|cc}
        \hline
        \textbf{Method} & \textbf{AUPR↑} & \textbf{AUROC↑} & \textbf{FPR95↓} & \textbf{H} & \textbf{JSD} \\ \hline
        Single & \cellcolor{blue!10}0.938 ±0.017 & \cellcolor{blue!10}0.893 ±0.024 & 0.318 ±0.071 & 0.480 ±0.096 & N/A \\ 
        MC Dropout & 0.932 ±0.008 & 0.884 ±0.009 & 0.324 ±0.019 & 0.608 ±0.056 & 0.122 ±0.016 \\ 
        Deep Ensemble & \cellcolor{blue!30}0.953 ±0.006 & \cellcolor{blue!30}0.924 ±0.011 & \cellcolor{blue!30}0.199 ±0.024 & 0.829 ±0.067 & 0.277 ±0.032\\ 
        BatchEnsemble & 0.932 ±0.015 & 0.888 ±0.020 & \cellcolor{blue!10}0.297 ±0.061 & 0.495 ±0.100 & 0.010 ±0.006\\
        \hline
    \end{tabular}
    \label{tab:ood}
\end{table}

Across all benchmarks, Deep Ensembles achieve the strongest accuracy, calibration, and ID/OOD separation.
MC Dropout, while cheaper, still exhibits ensemble-like gains over a single model.
In contrast, BatchEnsemble yields only marginal improvements in calibration and its EU, measured by JSD, remains near-zero across all settings.
These observations suggest that the EU offered by BatchEnsemble is categorically different, as it is closer to a single model than to a true ensemble of different models.
In the next section, we therefore investigate whether limited diversity underlies this behavior in a controlled setting next.

\section{BatchEnsemble Is Not Diverse}
\label{sec:batchensemble-diversity}
The results provided in the previous section suggest that BatchEnsemble behaves more like a single model than an ensemble.
If this stems from insufficient member diversity, we should observe it in a controlled setting with minimal confounds.
We therefore analyze diversity in a simple 3-layer perceptron on MNIST with $k{=}4$ members (Appendix~\ref{appendix:experimental-details-mnist}), where perturbations apply to linear layers only.
This choice removes channel–rescaling effects in CNNs (Appendix~\ref{appendix:channel-rescaling-vs-kernel-perturbation}).
The architecture can therefore neither mask diversity in BatchEnsemble, nor induce it in Deep Ensembles, allowing for a fair comparison of ensemble diversity.

We assess diversity in both function and parameter space. 
Let $f_{\theta}:\mathcal{X}\!\to\!\mathcal{Y}$ denote a neural network predictor with weights $\theta$.
In function space, we use pairwise prediction disagreement on $\{x_i\}_{i=1}^N$:
\begin{equation*}
\mathrm{Disagreement}(f_{\theta_1},f_{\theta_2})=\tfrac{1}{N}\sum_{i=1}^N \mathbbm{1}\!\left[f_{\theta_1}(x_i)\neq f_{\theta_2}(x_i)\right],
\end{equation*}
In parameter space, we compare the (vectorized) weights via cosine similarity:
\begin{equation*}
\cos(\theta_1,\theta_2)=\frac{\langle\theta_1,\theta_2\rangle}{\|\theta_1\|\,\|\theta_2\|}.
\end{equation*}
Since BatchEnsemble defines members implicitly through input/output scalings per layer (Eq.~\ref{eq:be-forward}), we form explicit weight tensors using Eq.~\ref{eq:batch-ensemble} before computing cosine similarity.
We present the results of our experiments in Figure~\ref{fig:2}.

\begin{figure}[h!]
    \centering
    \begin{subfigure}{0.74\textwidth}
        \centering
        \includegraphics[width=\linewidth]{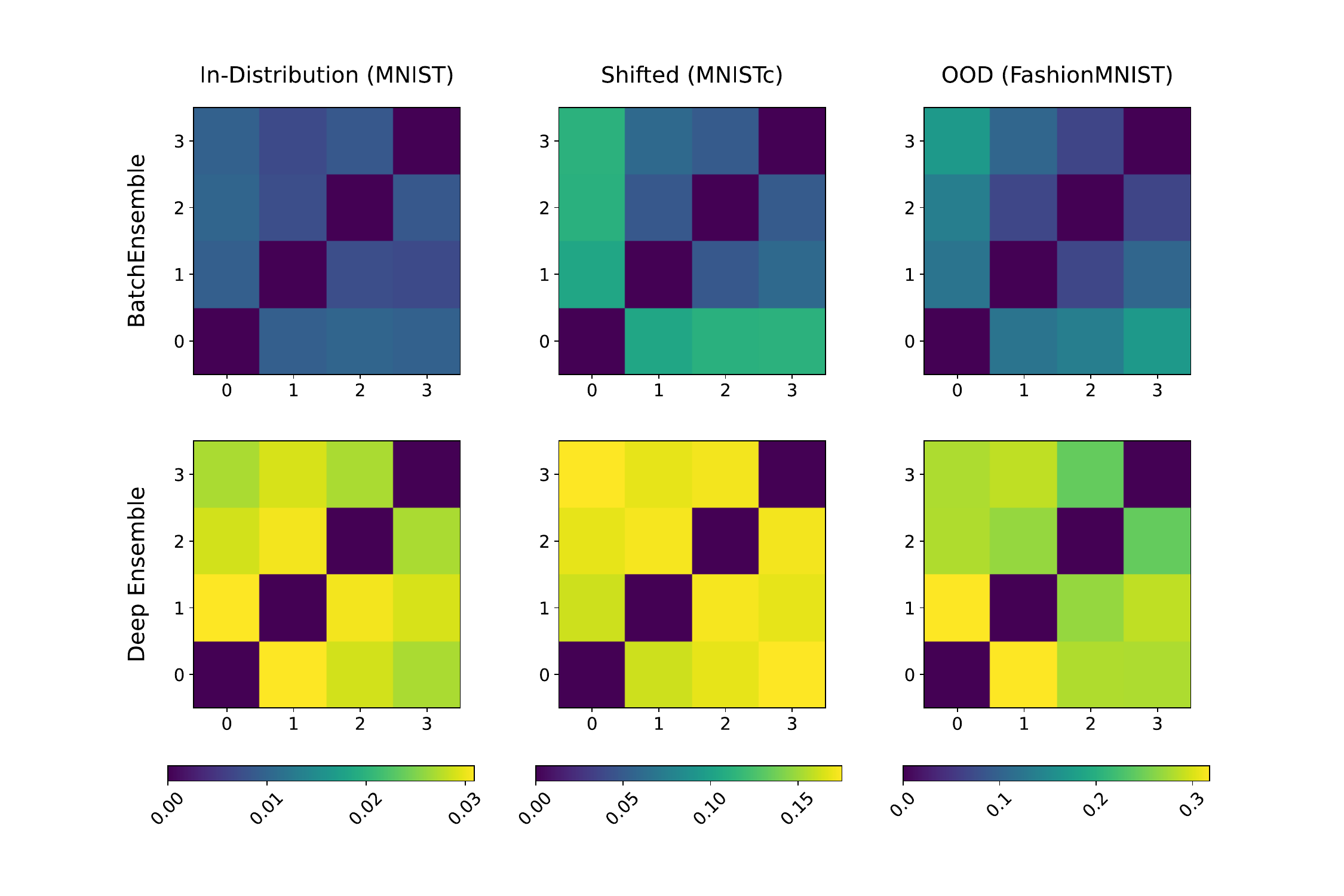}
        \subcaption{Disagreement across in-distribution, distribution shifted and OOD test sets.}
    	\label{fig:disagreement-matrices-be-vs-de}
    \end{subfigure}
    \hfill
    \begin{subfigure}{0.245\textwidth}
        \centering
        \includegraphics[width=\linewidth]{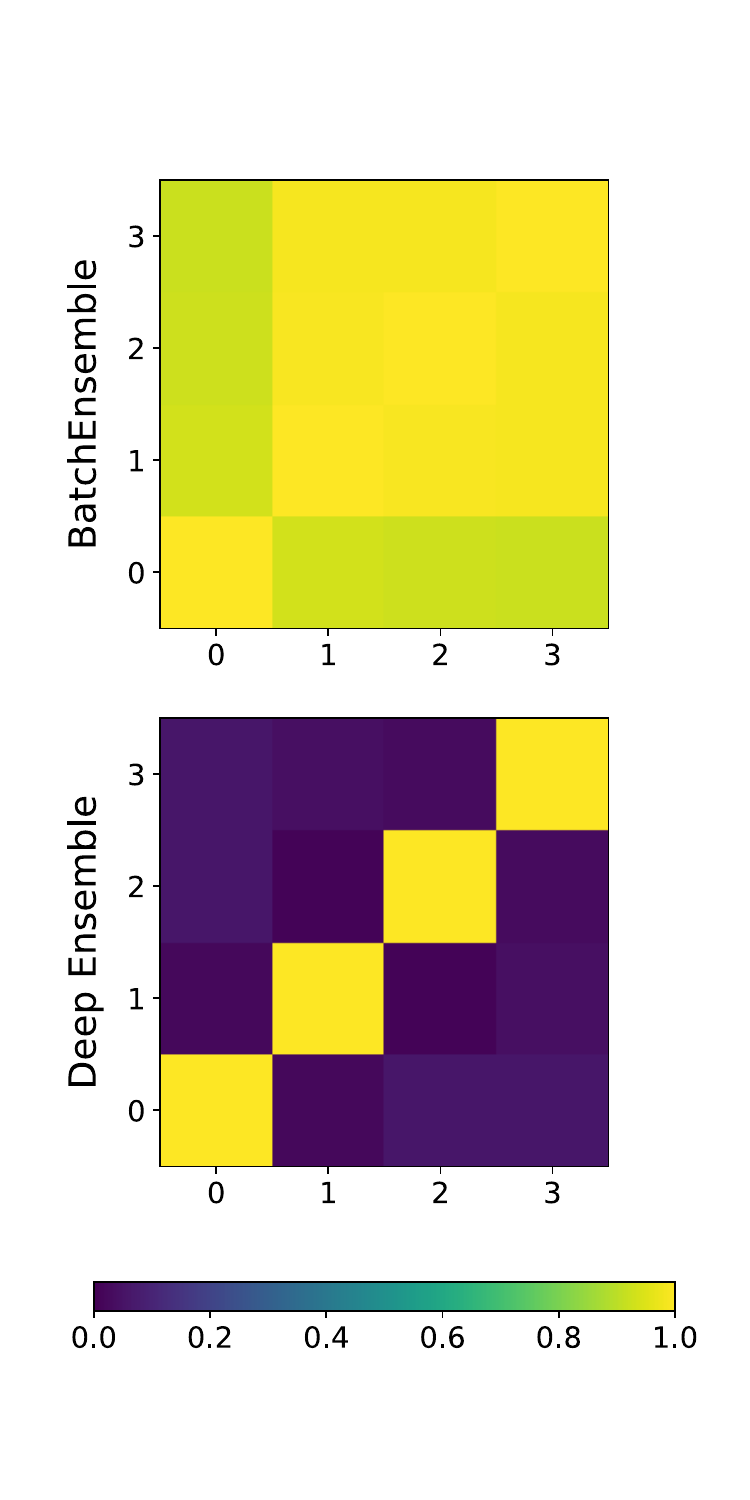}
        \subcaption{Cosine similarity}
    	\label{fig:cosine-similarity-matrices-be-vs-de}
    \end{subfigure}
    \caption{
        \textbf{BatchEnsemble lacks functional and parametric diversity (MNIST MLP).}
        (a) Pairwise prediction disagreement across ID, distribution-shifted, and OOD test sets (higher is more diverse).
        (b) Cosine similarity of members’ weights (higher is more similar).
        Deep Ensembles are diverse in both spaces; BatchEnsemble members cluster tightly.
    }\label{fig:2}
\end{figure}

BatchEnsemble exhibits minimal \emph{functional} diversity: pairwise prediction disagreement is near-zero across ID, shift, and OOD settings (Figure~\ref{fig:disagreement-matrices-be-vs-de}).
It also fails to diversify in \emph{parameter} space, with members showing cosine similarity $\approx 1$ (Figure~\ref{fig:cosine-similarity-matrices-be-vs-de}).
In contrast, Deep Ensembles display substantial predictive disagreement and low weight-space similarity, indicating distinct solutions.
A slight deviation of the shared base path (index~0) under shift/OOD does not translate into meaningful member diversity since the perturbed members~1–3 remain tightly coupled in both spaces.
Together, these results indicate limited capacity of BatchEnsemble to realize multiple predictive modes, consistent with its near-zero epistemic uncertainty in Section~\ref{sec:batchensemble-underperforms}.

\section{Conclusion}
BatchEnsemble is designed to provide ensemble-like uncertainty at near single-model cost, yet in our setting it failed to reproduce the behaviour of Deep Ensembles.
Across CIFAR-10/10C/SVHN, its predictive performance and calibration closely tracked a single model, and its epistemic uncertainty remained low.
A controlled MNIST study further revealed near-zero functional disagreement and near-unity weight-space similarity, indicating limited ability to realize distinct predictive modes.

To support a fuller assessment of efficient ensemble-like UQ methods, we recommend reporting uncertainty measures such as predictive entropy (H) and Jensen–Shannon divergence (JSD) alongside standard accuracy, calibration, and OOD metrics.
Simple diagnostics of functional and weight-space diversity, as used here, can additionally help determine whether a method expresses genuinely distinct predictive modes or merely produces correlated variants of a single model.

\section{Limitations \& Future Work}
Our findings differ from the stronger BatchEnsemble results reported by \citet{wen2020batchensemble}.
This discrepancy is largely attributable to scale \footnote{
    \citet{wen2020batchensemble} used wider and deeper networks trained for substantially longer (e.g., ResNet-32×4, 250–375 epochs), whereas our evaluation followed the lighter protocol of \citet{ovadia2019predictiveUncertaintyDataShift} with ResNet-18 trained for 75 epochs.
    The improved single-model performance reported in \citet{wen2020batchensemble} relative to the Ovadia benchmark suggests that much of the reported gain is driven by increased capacity rather than ensembling-specific effects.
    Unfortunately, their benchmark does not report uncertainty measures such as H or JSD and does not analyse functional or weight-space diversity.
}.
Although larger models can narrow the performance gap, scaling and ensembling offer fundamentally different advantages under equal parameter budgets \cite{lobacheva2020power}.
Thus, a natural next step is to measure BatchEnsemble’s  diversity and epistemic uncertainty under increased width, depth, and training time, extending this analysis to the exact settings used by \citet{wen2020batchensemble}. 
It would also be valuable to assess these behaviors across different architectures and tasks, 
including Transformers and regression settings, to test the generality of our observations. 
Finally, a theoretical account connecting empirical diversity limitations to the constraints imposed by BatchEnsemble’s rank-1 multiplicative parameterization may help clarify the fundamental constraints of the method.

\bibliographystyle{plainnat}
\bibliography{references}

@article{6296535,
  title = {The {{MNIST}} Database of Handwritten Digit Images for Machine Learning Research [Best of the Web]},
  author = {Deng, Li},
  year = 2012,
  journal = {IEEE Signal Processing Magazine},
  volume = {29},
  number = {6},
  pages = {141--142},
  doi = {10.1109/MSP.2012.2211477},
  keywords = {Machine learning}
}

@article{amodei2016concrete,
  title = {Concrete Problems in {{AI}} Safety},
  author = {Amodei, Dario and Olah, Chris and Steinhardt, Jacob and Christiano, Paul and Schulman, John and Man{\'e}, Dan},
  year = 2016,
  journal = {arXiv preprint arXiv:1606.06565},
  eprint = {1606.06565},
  archiveprefix = {arXiv}
}

@inproceedings{blundell2015weight,
  title = {Weight Uncertainty in Neural Network},
  booktitle = {International Conference on Machine Learning},
  author = {Blundell, Charles and Cornebise, Julien and Kavukcuoglu, Koray and Wierstra, Daan},
  year = 2015,
  pages = {1613--1622},
  publisher = {PMLR}
}

@inproceedings{davis2006relationship,
  title = {The Relationship between {{Precision-Recall}} and {{ROC}} Curves},
  booktitle = {Proceedings of the 23rd International Conference on {{Machine}} Learning},
  author = {Davis, Jesse and Goadrich, Mark},
  year = 2006,
  pages = {233--240}
}

@article{fort2019deep,
  title = {Deep Ensembles: {{A}} Loss Landscape Perspective},
  author = {Fort, Stanislav and Hu, Huiyi and Lakshminarayanan, Balaji},
  year = 2019,
  journal = {arXiv preprint arXiv:1912.02757},
  eprint = {1912.02757},
  archiveprefix = {arXiv}
}

@inproceedings{gal2016MCDropout,
  title = {Dropout as a {{Bayesian Approximation}}: {{Representing Model Uncertainty}} in {{Deep Learning}}},
  booktitle = {Proceedings of {{The}} 33rd {{International Conference}} on {{Machine Learning}}},
  author = {Gal, Yarin and Ghahramani, Zoubin},
  editor = {Balcan, Maria Florina and Weinberger, Kilian Q.},
  year = 2016,
  month = jun,
  series = {Proceedings of {{Machine Learning Research}}},
  volume = {48},
  pages = {1050--1059},
  publisher = {PMLR},
  address = {New York, New York, USA},
  abstract = {Deep learning tools have gained tremendous attention in applied machine learning. However such tools for regression and classification do not capture model uncertainty. In comparison, Bayesian models offer a mathematically grounded framework to reason about model uncertainty, but usually come with a prohibitive computational cost. In this paper we develop a new theoretical framework casting dropout training in deep neural networks (NNs) as approximate Bayesian inference in deep Gaussian processes. A direct result of this theory gives us tools to model uncertainty with dropout NNs -- extracting information from existing models that has been thrown away so far. This mitigates the problem of representing uncertainty in deep learning without sacrificing either computational complexity or test accuracy. We perform an extensive study of the properties of dropout's uncertainty. Various network architectures and non-linearities are assessed on tasks of regression and classification, using MNIST as an example. We show a considerable improvement in predictive log-likelihood and RMSE compared to existing state-of-the-art methods, and finish by using dropout's uncertainty in deep reinforcement learning.}
}

@article{graves2011practical,
  title = {Practical Variational Inference for Neural Networks},
  author = {Graves, Alex},
  year = 2011,
  journal = {Advances in neural information processing systems},
  volume = {24}
}

@inproceedings{he2016resnet,
  title = {Deep Residual Learning for Image Recognition},
  booktitle = {Proceedings of the {{IEEE}} Conference on Computer Vision and Pattern Recognition},
  author = {He, Kaiming and Zhang, Xiangyu and Ren, Shaoqing and Sun, Jian},
  year = 2016,
  pages = {770--778}
}

@article{hendrycks2016baseline,
  title = {A Baseline for Detecting Misclassified and Out-of-Distribution Examples in Neural Networks},
  author = {Hendrycks, Dan and Gimpel, Kevin},
  year = 2016,
  journal = {arXiv preprint arXiv:1610.02136},
  eprint = {1610.02136},
  archiveprefix = {arXiv}
}

@article{hendrycks2019benchmarkingRobustness,
  title = {Benchmarking Neural Network Robustness to Common Corruptions and Perturbations},
  author = {Hendrycks, Dan and Dietterich, Thomas},
  year = 2019,
  journal = {arXiv preprint arXiv:1903.12261},
  eprint = {1903.12261},
  archiveprefix = {arXiv}
}

@article{kingma2014adam,
  title = {Adam: {{A}} Method for Stochastic Optimization},
  author = {Kingma, Diederik P},
  year = 2014,
  journal = {arXiv preprint arXiv:1412.6980},
  eprint = {1412.6980},
  archiveprefix = {arXiv}
}

@article{krizhevsky2009cifar,
  title = {Learning Multiple Layers of Features from Tiny Images},
  author = {Krizhevsky, Alex and Hinton, Geoffrey and others},
  year = 2009,
  publisher = {Toronto, ON, Canada}
}

@article{lakshminarayanan2017deepEnsembles,
  title = {Simple and Scalable Predictive Uncertainty Estimation Using Deep Ensembles},
  author = {Lakshminarayanan, Balaji and Pritzel, Alexander and Blundell, Charles},
  year = 2017,
  journal = {Advances in neural information processing systems},
  volume = {30}
}

@article{laurent2022packed,
  title = {Packed-Ensembles for Efficient Uncertainty Estimation},
  author = {Laurent, Olivier and Lafage, Adrien and Tartaglione, Enzo and Daniel, Geoffrey and Martinez, Jean-Marc and Bursuc, Andrei and Franchi, Gianni},
  year = 2022,
  journal = {arXiv preprint arXiv:2210.09184},
  eprint = {2210.09184},
  archiveprefix = {arXiv}
}

@article{lobacheva2020power,
  title = {On Power Laws in Deep Ensembles},
  author = {Lobacheva, Ekaterina and Chirkova, Nadezhda and Kodryan, Maxim and Vetrov, Dmitry P},
  year = 2020,
  journal = {Advances In Neural Information Processing Systems},
  volume = {33},
  pages = {2375--2385}
}

@article{mu2019mnistc,
  title = {{{MNIST-C}}: {{A}} Robustness Benchmark for Computer Vision},
  author = {Mu, Norman and Gilmer, Justin},
  year = 2019,
  journal = {arXiv preprint arXiv:1906.02337},
  eprint = {1906.02337},
  archiveprefix = {arXiv}
}

@inproceedings{netzer2011svhn,
  title = {Reading Digits in Natural Images with Unsupervised Feature Learning},
  booktitle = {{{NIPS}} Workshop on Deep Learning and Unsupervised Feature Learning},
  author = {Netzer, Yuval and Wang, Tao and Coates, Adam and Bissacco, Alessandro and Wu, Baolin and Ng, Andrew Y and others},
  year = 2011,
  volume = {2011},
  pages = {4},
  publisher = {Granada}
}

@article{ovadia2019predictiveUncertaintyDataShift,
  title = {Can You Trust Your Model's Uncertainty? Evaluating Predictive Uncertainty under Dataset Shift},
  author = {Ovadia, Yaniv and Fertig, Emily and Ren, Jie and Nado, Zachary and Sculley, David and Nowozin, Sebastian and Dillon, Joshua and Lakshminarayanan, Balaji and Snoek, Jasper},
  year = 2019,
  journal = {Advances in neural information processing systems},
  volume = {32}
}

@article{wen2020batchensemble,
  title = {Batchensemble: An Alternative Approach to Efficient Ensemble and Lifelong Learning},
  author = {Wen, Yeming and Tran, Dustin and Ba, Jimmy},
  year = 2020,
  journal = {arXiv preprint arXiv:2002.06715},
  eprint = {2002.06715},
  archiveprefix = {arXiv}
}

@article{xiao2017fashion,
  title = {Fashion-Mnist: A Novel Image Dataset for Benchmarking Machine Learning Algorithms},
  author = {Xiao, Han and Rasul, Kashif and Vollgraf, Roland},
  year = 2017,
  journal = {arXiv preprint arXiv:1708.07747},
  eprint = {1708.07747},
  archiveprefix = {arXiv}
}

\appendix

\section{BatchEnsemble Vectorization}
\label{appendix:batchensemble-vectorization}
To enable a vectorized forward pass across all $k$ ensemble members, \citet{wen2020batchensemble} constrain the mini-batch size $b$ to be divisible by $k$ such that each ensemble member processes a disjoint subset of $b/k$ samples.
The input matrix $X \in \mathbb{R}^{b \times n}$ is split accordingly, and the corresponding $k$ rank-1 scaling vectors $r_i \in \mathbb{R}^m$ and $s_i \in \mathbb{R}^n$ are tiled $b/k$ times to form the matrices $R \in \mathbb{R}^{b \times m}$ and $S \in \mathbb{R}^{b \times n}$, which contain the row and column scaling factors, respectively.

This allows the forward pass for the full mini-batch to be written as:
\begin{equation}
\label{eq:be-vectorized}
    \bigl( (S \circ X) W^\top \bigr) \circ R,
\end{equation}

where $W \in \mathbb{R}^{m \times n}$ is the shared weight matrix, and Hadamard products are applied row-wise.
While this formulation is computationally efficient, each ensemble member only sees a fraction $1/k$ of the data in a given iteration.

Alternatively, the mini-batch can be expanded such that each input sample is repeated $k$ times, once for each ensemble member.
This results in an input matrix $X \in \mathbb{R}^{bk \times n}$, where each original sample appears $k$ times.
The scaling matrices $R \in \mathbb{R}^{bk \times m}$ and $S \in \mathbb{R}^{bk \times n}$ are constructed by repeating each ensemble member’s vectors $r_i$ and $s_i$ over the entire batch.
This ensures that each ensemble member processes all data in every training iteration, at the cost of increased training time due to input duplication (see Appendix~\ref{appendix:computational-cost}).

\section{Proofs For Limited Expressivity Of BatchEnsemble}
\label{appendix:proofs}

\setinclusion*
\begin{proof}
  The inclusion trivially holds as any weight reached by BatchEnsemble is in $\mathbb{R}^{m\times n}$, and thus reachable by Deep Ensembles.
  Strictness can be shown as follows.
  Let $\oslash$ be the element-wise division.
  Consider two weight matrices $A,B\in \mathbb{R}^{m\times n}$ with non-zero entries and such that $Q := A \oslash B$ has rank strictly larger than 1.
  If $A$ and $B$ were to be representable by BatchEsnemble, then there would exist $W$, $r$, and $s$ such that $A = W \circ rs^\top$ and $B=W$.
  Then, $Q = rs^\top$ has rank 1, a contradiction.
  Finally, note that $\mathcal{W}_\text{BE}$ depends on a total of $mn + k(m+n)$ parameters.
  Instead, $\mathcal{W}_\text{DE}$ has $k mn$ parameters.
  For $k, m, n>3$, $\mathcal{W}_\text{BE}$ is a lower-dimensional subset of $\mathcal{W}_\text{DE}$ and has therefore measure zero in $\mathcal{W}_\text{DE}$.
\end{proof}

\section{Metrics}
\label{appendix:metrics}

\paragraph{Predictive performance.}
We report accuracy (higher is better) and negative log-likelihood (NLL; lower is better):
\[
\mathrm{NLL}(x,y) = -\log p_\theta(y\mid x).
\]
NLL rewards high probability for correct outcomes and penalizes uncertainty.
However, it punishes \emph{overconfident wrong predictions} more severely than the Brier score due to its logarithmic form.

\paragraph{Calibration.}
Expected Calibration Error (ECE; lower is better) partitions predictions into \(B\) confidence bins and measures the weighted average gap between empirical accuracy and mean confidence:
\[
\mathrm{ECE} = \sum_{b=1}^{B} \frac{|\mathcal{B}_b|}{N}\,\bigl|\mathrm{acc}(\mathcal{B}_b)-\mathrm{conf}(\mathcal{B}_b)\bigr|.
\]
We use \(B{=}15\) equal-width bins.

\paragraph{Uncertainty proxies.}
Let \(\bar{p}(y\mid x)=\tfrac{1}{k}\sum_{i=1}^k p_i(y\mid x)\) be the ensemble mean predictive distribution.
Predictive entropy (higher indicates more total uncertainty):
\[
H(x)= -\sum_c \bar{p}(y{=}c\mid x)\,\log \bar{p}(y{=}c\mid x).
\]
Jensen–Shannon divergence (JSD; higher indicates more epistemic disagreement):
\[
\mathrm{JSD}(x)= H(x) - \frac{1}{k}\sum_{i=1}^k H_i(x),
\]
where $H_i$ denotes the predictive entropy of the $i$-th ensemble member.

\paragraph{OOD detection.}
Beyond moderate shifts in input quality, neural networks are often deployed in settings where the input domain may deviate substantially from anything seen during training such that the entire notion of prediction becomes ill-posed.
In such cases, even well-calibrated models may fail catastrophically if they do not recognize that the input lies outside their domain of competence.
To address this, models must be equipped with mechanisms to identify OOD inputs and abstain from confident predictions \citep{amodei2016concrete}.

One simple and widely used approach for OOD detection is to use the maximum softmax probability (MSP) of a classifier as a confidence score \citep{hendrycks2016baseline}.
If this score falls below a certain threshold, the input is flagged as OOD.
This transforms the task into a binary classification problem: 
determining whether a given input is from the training distribution (ID) or not (OOD).

To evaluate OOD detection performance, we define the confidence function $s(x) \in \mathbb{R}$ as the MSP.
An input $x$ is rejected (i.e., classified as OOD) if $s(x) < \tau$ for a threshold $\tau$:
$$
    \text{Reject}(x) = \mathbb{I}(s(x) < \tau).
$$
This setup allows us to apply standard binary classification metrics, which we describe below.

\textit{AUROC} (area under ROC; higher is better) is a threshold-free ranking metric defined by the ROC curve \((\mathrm{TPR}(\tau), \mathrm{FPR}(\tau))\) as the decision threshold \(\tau\) varies:
\[
\mathrm{AUROC}=\int_0^1 \mathrm{TPR}(\mathrm{FPR})\, d\,\mathrm{FPR}.
\]
An equivalent interpretation is the probability that a randomly chosen ID input $x^+$ receives a higher confidence score than a randomly chosen OOD input $x^-$:
$$
    \text{AUROC} = \mathbb{P}(s(x^+) > s(x^-)).
$$

\textit{AUPR} (area under precision–recall; higher is better) summarizes the PR curve as recall varies:
\[
\mathrm{Precision}=\frac{\mathrm{TP}}{\mathrm{TP}+\mathrm{FP}},\quad
\mathrm{Recall}=\frac{\mathrm{TP}}{\mathrm{TP}+\mathrm{FN}},\quad
\mathrm{AUPR}=\int_0^1 P(R)\, dR.
\]
It quantifies how many of the samples predicted as OOD are actually OOD across all thresholds $R$.
Since AUPR is more sensitive than AUROC when positives are rare \cite{davis2006relationship} it is useful for OOD scenarios where OOD samples are rare.

\textit{FPR95} (lower is better) is the false-positive rate at \(\mathrm{TPR}=0.95\).
It measures the proportion of OOD samples misclassified as ID when the true positive rate (TPR) for ID samples is set to 95:
\[
\mathrm{FPR95}=\mathrm{FPR}\ \text{at}\ \mathrm{TPR}=0.95.
\]
As such it captures worst-case performance under high recall.

\section{Experimental Setup}
\label{appendix:experimental-details}
\paragraph{General procedure.}
Unless specified otherwise, all models are trained using the standard cross-entropy loss applied to individual member predictions (not to the ensemble average).
For MC Dropout, we follow the common practice of training the model as a deterministic network with dropout enabled only at test time.
Deep Ensemble and BatchEnsemble predictors are treated as independent models: each member’s loss is computed and backpropagated separately.
For BatchEnsemble, we use the formulation described in Appendix~\ref{appendix:batchensemble-vectorization}, repeating each training sample $k$ times per batch and replicating the scaling vectors $\mathcal{B}$ times so that every predictor observes the entire dataset in each epoch.
All experiments are run with random seeds $\{42, 69, 123, 456, 789\}$, and confidence intervals are computed using bootstrapping with the Student-$t$ distribution (4~d.o.f.).
All code and configuration files are publicly available at this \textcolor{blue}{\href{https://osf.io/a8yt6/overview?view_only=13df5f17c6764b9aa0b864d387a1cc40}{repository}}.

\paragraph{Implementation details.}
Unless stated otherwise, we adopt the default hyperparameter configurations from \texttt{torch-uncertainty}, as used by \citet{laurent2022packed}.
Dropout layers are applied after each linear layer and before the activation in MLPs, and within ResNet blocks after the first convolution and before the activation, matching the default implementations in \texttt{torch-uncertainty}.
All convolutional and linear layers use He (Kaiming) initialization (PyTorch defaults for ReLU networks).
BatchEnsemble scaling vectors $r_i$ and $s_i$ are initialized from $\mathcal{N}(0,\,0.5)$, following the original implementation.

\subsection{CIFAR-10}
\label{appendix:experimental-details-cifar-10}

We follow the evaluation protocol of \citet{ovadia2019predictiveUncertaintyDataShift}.

\begin{itemize}
    \item \textbf{Models:} Single model, MC Dropout (rate 0.1, four forward passes), Deep Ensemble ($k{=}4$), and BatchEnsemble ($k{=}4$).
    \item \textbf{Architecture:} All models use ResNet-18 \citep{he2016resnet}.
    \item \textbf{Datasets:} Training on CIFAR-10 \citep{krizhevsky2009cifar}; evaluation on:
    \begin{enumerate}
        \item CIFAR-10 test set (in-distribution), 
        \item CIFAR-10C \citep{hendrycks2019benchmarkingRobustness}, corruption severity 5 (distribution shift),
        \item SVHN \citep{netzer2011svhn} (OOD).
    \end{enumerate}
    \item \textbf{Training:} 75 epochs with SGD, Nesterov momentum 0.9, weight decay $5\times10^{-4}$, batch size 128, and an initial learning rate of 0.05 decayed by 0.1 at epochs 25 and 50.
    \item \textbf{Reporting:} Mean and 95\% CIs (Student-$t$, 4~d.o.f.); AUROC/AUPR micro-averaged across OOD samples.
    \item \textbf{Hardware:} Single NVIDIA RTX~A5000 GPU.
\end{itemize}

To ensure comparability between methods, we use four ensemble members for all methods and interpret MC Dropout as an ensemble of four stochastic forward passes.
Due to computational constraints, we used smaller ensembles than those in \citet{ovadia2019predictiveUncertaintyDataShift}, but they report diminishing returns beyond five members and beyond four MC samples.
Although \citet{ovadia2019predictiveUncertaintyDataShift} used slightly larger models and longer training (ResNet-20, 200 epochs), our single-model, Deep Ensemble, and MC Dropout results closely reproduce their ID and shifted results.

\subsection{MNIST}
\label{appendix:experimental-details-mnist}
\begin{itemize}
    \item \textbf{Models:} Single model, MC Dropout (rate 0.05, four forward passes), Deep Ensemble ($k{=}4$), and BatchEnsemble ($k{=}4$).
    \item \textbf{Architecture:} 3-layer MLP with hidden dimensions (64, 32, 16) and ReLU activation.
    \item \textbf{Datasets:} MNIST \citep{6296535} for training and evaluation on:
    \begin{enumerate}
        \item MNIST test set (in-distribution),
        \item MNIST-C \citep{mu2019mnistc} (distribution shift),
        \item FashionMNIST \citep{xiao2017fashion} (OOD).
    \end{enumerate}
    \item \textbf{Training:} Up to 75 epochs with early stopping (patience = 10, threshold = 0.001 val-acc), Adam optimizer \citep{kingma2014adam} with $\beta_1{=}0.9$, $\beta_2{=}0.999$, $\epsilon{=}10^{-8}$, weight decay $10^{-4}$, batch size 128, learning rate 0.05 decayed by 0.1 at epochs 25 and 50.
    \item \textbf{Hardware:} Single NVIDIA A40 GPU.
\end{itemize}

\section{Computational Cost and Training Time}
\label{appendix:computational-cost}
We report the number of trainable parameters and average training time (including evaluation on validation and test sets) for each method on ResNet-18 trained on CIFAR-10, using the experimental setup described in Appendix~\ref{appendix:experimental-details-cifar-10}.
Training BatchEnsemble as described such that each ensemble member sees all training data (see Appendix~\ref{appendix:batchensemble-vectorization}) leads to an increase in training duration approaching the training time of Deep Ensemble. 

\begin{table}[h!]
    \centering
    \small
    \caption{Comparison of Computational Efficiency}
    \begin{tabular}{c|ccccc}
        \hline
        Method & Single & MC Dropout & Deep Ensemble & BatchEnsemble \\ \hline
        \#Params & 11.17 Mio. & 11.17 Mio. & 44.71 Mio. & 11.22 Mio. \\
        Training Time & 15min & 18min & 1h15min & 1h1min \\ \hline
    \end{tabular}
    \label{tab:computational-cost}
\end{table}

\section{Channel Rescaling vs. Rank-1 Kernel Perturbation}
\label{appendix:channel-rescaling-vs-kernel-perturbation}

In the original BatchEnsemble implementation used in this work, convolution layers are perturbed by rescaling the input and output channels using two learned vectors $r \in \mathbb{R}^{C_{\text{in}}}$ and $s \in \mathbb{R}^{C_{\text{out}}}$, respectively.
These scaling factors are applied multiplicatively to the input and output of a shared convolution kernel and are broadcast across the spatial dimensions.
This approach modifies the activations rather than the convolution weights themselves which may be less expressive than applying rank-1 perturbations kernel-wise, as done in linear layers

In this appendix, we explore an alternative formulation where perturbations are applied directly to the convolution kernels using the same rank-1 factorization as for linear layers.
Following Equation~\eqref{eq:batch-ensemble}) each ensemble member's convolution kernel $W_i \in \mathbb{R}^{C_{\text{out}} \times C_{\text{in}} \times K \times K}$ is constructed via element-wise multiplication of the shared kernel $W$ with the outer product of learned $r_i \in \mathbb{R}^{C_{\text{out}} \times C_{\text{in}} \times K}$ and $s_i \in \mathbb{R}^{C_{\text{out}} \times C_{\text{in}} \times K}$ taken over the last dimensions of size $K$.

We compare channel-wise (BatchEnsemble (C)) and kernel-wise (BatchEnsemble (K)) perturbations on CIFAR-10, CIFAR-10C (severity 5), and SVHN.
While the kernel-wise variant improves calibration and OOD detection slightly, it performs worse in terms of accuracy even compared to the already underperforming channel-wise variant.
This is likely driven by inflated entropy in the individual members’ predictions: the ensemble’s predictive entropy (H) increases, yet JSD stays low, suggesting a rise in aleatoric rather than epistemic uncertainty.
Despite the added parameters and computational cost\footnote{
    For $3 \times 3$ convolutions and rank-1 perturbations, kernel-wise perturbations introduce 6 parameters per filter, leading to $6 \cdot C_{\text{in}} C_{\text{out}}$ parameters per layer, compared to just $C_{\text{in}} + C_{\text{out}}$ for channel-wise scaling.
}, kernel-wise perturbations still fall short of approximating the diversity and performance of Deep Ensembles or MC Dropout.

\begin{table}[!h]
    \centering
    \footnotesize
    \caption[
        BE (C) vs. (K): In-Distribution Benchmark (CIFAR-10)
    ]{
        In-Distribution Benchmark (CIFAR-10)
    }
    \begin{tabular}{c|ccc|cc}
        \hline
        \textbf{Method} & \textbf{Acc↑} & \textbf{NLL↓} & \textbf{ECE↓} & \textbf{H} & \textbf{JSD} \\ \hline
        Standard & \cellcolor{blue!10}0.941 ±0.004 & 0.237 ±0.003 & 0.034 ±0.003 & 0.073 ±0.005 & N/A \\ 
        MC Dropout & 0.936 ±0.001 & \cellcolor{blue!10}0.210 ±0.007 & \cellcolor{blue!10}0.021 ±0.002 & 0.120 ±0.005 & 0.022 ±0.001 \\ 
        Deep Ensemble & \cellcolor{blue!30}0.952 ±0.001 & \cellcolor{blue!30}0.152 ±0.003 & \cellcolor{blue!30}0.007 ±0.002 & 0.136 ±0.001 & 0.037 ±0.001 \\ 
        BatchEnsemble (C) & 0.938 ±0.001 & 0.230 ±0.006 & 0.032 ±0.001 & 0.088 ±0.003 & 0.002 ±0.000 \\
        BatchEnsemble (K) & 0.931 ±0.002 & 0.231 ±0.010 & 0.025 ±0.002 & 0.124 ±0.003 & 0.010 ±0.001 \\
        \hline
    \end{tabular}
    \label{tab:channel-vs-kernel-ind}
\end{table}

\begin{table}[!h]
    \centering
    \footnotesize
    \caption{
        Distribution Shift Benchmark (CIFAR-10C, shift intensity 5)
    }
    \begin{tabular}{c|ccc|cc}
        \hline
        \textbf{Name} & \textbf{Acc↑} & \textbf{NLL↓} & \textbf{ECE↓} & \textbf{H} & \textbf{JSD} \\ \hline
        Standard & 0.558 ±0.008 & 2.545 ±0.142 & 0.323 ±0.014 & 0.327 ±0.026 & N/A \\ 
        MC Dropout &\cellcolor{blue!30}0.578 ±0.002 & \cellcolor{blue!10}2.054 ±0.084 & \cellcolor{blue!10}0.251 ±0.009 & 0.449 ±0.031 & 0.088 ±0.005 \\ 
        Deep Ensemble & \cellcolor{blue!10}0.575 ±0.006 & \cellcolor{blue!30}1.682 ±0.045 & \cellcolor{blue!30}0.206 ±0.009 & 0.593 ±0.018 & 0.176 ±0.005 \\ 
        BatchEnsemble (C) & 0.547 ±0.015 & 2.357 ±0.164 & 0.308 ±0.012 & 0.400 ±0.011 & 0.008 ±0.002 \\
        BatchEnsemble (K) & 0.521 ±0.008 & 2.266 ±0.110 & 0.275 ±0.012 & 0.544 ±0.023 & 0.039 ±0.004 \\
        \hline
    \end{tabular}
    \label{tab:channel-vs-kernel-shifted}
\end{table}

\begin{table}[!h]
    \centering
    \footnotesize
    \caption[
        BE (C) vs. (K): OOD Benchmark (SVHN)
    ]{
        OOD Benchmark (SVHN)
    }
    \begin{tabular}{c|ccc|cc}
        \hline
        \textbf{Name} & \textbf{AUPR↑} & \textbf{AUROC↑} & \textbf{FPR95↓} & \textbf{H} & \textbf{JSD} \\ \hline
        Standard & 0.938 ±0.017 & 0.893 ±0.024 & 0.318 ±0.071 & 0.480 ±0.096 & N/A \\ 
        MC Dropout & 0.932 ±0.008 & 0.884 ±0.009 & 0.324 ±0.019 & 0.608 ±0.056 & 0.122 ±0.016 \\ 
        Deep Ensemble & \cellcolor{blue!30}0.953 ±0.006 & \cellcolor{blue!30}0.924 ±0.011 & \cellcolor{blue!30}0.199 ±0.024 & 0.829 ±0.067 & 0.277 ±0.032\\ 
        BatchEnsemble (C) & 0.932 ±0.015 & 0.888 ±0.020 & 0.297 ±0.061 & 0.495 ±0.100 & 0.010 ±0.006\\
        BatchEnsemble (K) & \cellcolor{blue!10}0.944 ±0.008 & \cellcolor{blue!10}0.911 ±0.013 & \cellcolor{blue!10}0.231 ±0.025 & 0.692 ±0.064 & 0.043 ±0.008 \\
        \hline
    \end{tabular}
    \label{tab:channel-vs-kernel-ood}
\end{table}

\end{document}